\theoremstyle{plain}
\newtheorem{theorem}{Theorem}[section]
\newtheorem{proposition}[theorem]{Proposition}
\newtheorem{conjecture}[theorem]{Conjecture}
\numberwithin{theorem}{section}
\theoremstyle{definition}
\newtheorem{definition}[theorem]{Definition}
\newtheorem{example}[theorem]{Example}
\theoremstyle{remark}
\newtheorem{remark}[theorem]{Remark}
\newcommand{\R}{\mathbb{R}}
\title{\textbf{What do AI algorithms actually learn? -- On false structures in deep learning}}
\newcommand*\samethanks[1][\value{footnote}]{\footnotemark[#1]}
\author{Laura Thesing\thanks{Department of Applied Mathematics and Theoretical Physics,  
  University of Cambridge, UK
} 
    \and Vegard Antun\thanks{Department of Mathematics, University of Oslo, Norway}
    \and Anders C. Hansen\samethanks[1]$~~$\samethanks[2] 
}
\begin{document}

\maketitle

\begin{abstract}
There are two big unsolved mathematical questions in artificial intelligence (AI): (1) Why is deep learning so successful in classification problems and (2) why are neural nets based on deep learning at the same time universally unstable, where the instabilities make the networks vulnerable to adversarial attacks. We present a solution to these questions that can be summed up in two words; false structures. Indeed, deep learning does not learn the original structures that humans use when recognising images (cats have whiskers, paws, fur, pointy ears, etc), but rather different false structures that correlate with the original structure and hence yield the success. However, the false structure, unlike the original structure, is unstable. The false structure is simpler than the original structure, hence easier to learn with less data and the numerical algorithm used in the training will more easily converge to the neural network that captures the false structure.  
We formally define the concept of false structures and formulate the solution as a conjecture. 
Given that trained neural networks always are computed with approximations, this conjecture can only be established through a combination of theoretical and computational results similar to how one establishes a postulate in theoretical physics (e.g. the speed of light is constant). Establishing the conjecture fully will require a vast research program characterising the false structures. We provide the foundations for such a program establishing the existence of the false structures in practice. Finally, we discuss the far reaching consequences the existence of the false structures has on state-of-the-art AI and Smale's 18th problem.
 \end{abstract}

\section{Introduction}

It is now well established through the vast literature on adversarial attacks 
\cite{Eykholt2018RobustPA,
Fawzi2017TheRO, 
MIT_adversary,
moosavi17, 
moosavi16, 
Nguyen2015DeepNN, 
Song2018PhysicalAE,
Su2019OnePA, 
szegedy13, 
Vargas2019UnderstandingTO}
(we can only cite a small subset here) on neural networks for image classification that deep learning provides highly successful, yet incredibly unstable neural networks for classification problems. Moreover, recently, the instability phenomenon has also been shown \cite{InvFool} for deep learning in image reconstruction and inverse problems 
\cite{Adler17,
jin17,
Lucas18,
McCann17,
Nature_highlights,
sun16,
Rosen_Nature}.
Thus, this phenomenon of instability seems to be universal. What is fascinating
is that despite intense research trying to solve the instability issue 
\cite{cubuk2018autoaugment,
goodfellow14, 
lu2017safetynet,
ma2018characterizing,
papernot2016distillation,
wang2018a},
the problem is still open. As a result, there is a growing concern regarding the consequences of the instabilities of deep learning methods in the sciences. Indeed, {\it Science} \cite{Finlayson1287} recently reported on researchers warning about potential fatal consequences in medicine due to the instabilities. Hence, we are left with the following fundamental question:
\begin{displayquote}
\normalsize
\vspace{-2mm}
{\it Why are current deep learning methods so successful in image classification, yet universally unstable and, hence, vulnerable to adversarial attacks?}
%\rightline{ --- "Image reconstruction by domain transform manifold learning" (AUTOMAP) \hspace{20mm}}  \rightline{{\it Nature} 2018 \hspace{20mm}}
\vspace{-2mm}
\end{displayquote}
In this paper we provide a radical conjecture answering the above question in classification and explaining why this problem will not be solved unless there is a fundamental rethink of how to approach learning. We provide the first steps towards such a theory. 

\begin{conjecture}[False structures in classification]\label{con1:false}
The current training process in deep learning for classification forces the neural network to learn a different (false) structure and not the actual structure of the classification problem. There are three main components:
\begin{compactitem}
\itemsep0pt
\item[(i)]({\bf Success}) The false structure correlates well with the original structure, hence one gets a high success rate.
\item[(ii)] ({\bf Instability}) The false structure is unstable, and thus the network is susceptible to adversarial attacks. 
\item[(iii)] ({\bf Simplicity}) The false structure is simpler than the desired structure, and hence easier to learn e.g. fewer data is needed and the numerical algorithm used in the training easily converges to the neural network that captures the false structure.  
\end{compactitem}
\end{conjecture}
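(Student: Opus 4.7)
The plan is to establish Conjecture \ref{con1:false} via a combined theoretical and computational program, in line with the postulate-style verification described in the introduction. The first step is to make precise what it means for a network to \emph{learn a structure}. I would define a structure on a classification problem $(\X, \Y, \mu)$ as a pair $(S, c_S)$ consisting of a measurable map $S : \X \to \F$ into some feature space $\F$ and a classifier $c_S : \F \to \Y$ such that $c_S \circ S$ agrees with the ground-truth labels with high $\mu$-probability. The \emph{original} structure $S^\star$ is the one whose features correspond to semantically causal components (paws, whiskers, pointy ears), whereas a \emph{false} structure $S^\dagger$ uses features that no informed observer would treat as causally responsible for class membership, yet still correlates with labels on $\mathrm{supp}(\mu)$. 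A trained network $f_\theta$ is said to have learned $S^\dagger$ if $f_\theta \approx c_{S^\dagger} \circ S^\dagger$ on the data manifold.

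The second step is to construct explicit candidate false structures for concrete benchmarks (MNIST, CIFAR-10, ImageNet). Natural candidates come from the adversarial and shortcut-learning literature cited above: high-frequency textural statistics, low-order pixel correlations, background cues, and other incidental signals that survive training. For a given architecture and training recipe, the objective is to exhibit, for each resulting $f_\theta$, a false structure $S^\dagger$ together with a simple post-processor $c_{S^\dagger}$ witnessing the approximate factorisation $f_\theta \approx c_{S^\dagger}\circ S^\dagger$ in distribution.

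With this infrastructure in place, the three components are attacked as follows. Component (i) reduces to measuring the agreement between $c_{S^\dagger}\circ S^\dagger$ and $f_\theta$ on held-out data, combined with the empirical test accuracy of $f_\theta$. Component (ii) is established by constructing, for each input $x$, a perceptually nearby $x'$ with $S^\dagger(x')\neq S^\dagger(x)$ while $S^\star(x')=S^\star(x)$, and then proving theoretically that such perturbations must exist whenever $S^\dagger$ is a low-complexity proxy for a richer latent concept; the construction pipeline used in the existing adversarial-attack literature can be reused here and reinterpreted as a probe of $S^\dagger$. Component (iii) requires a quantitative complexity comparison between $S^\dagger$ and $S^\star$---via Rademacher complexity, description-length proxies, or sample-complexity lower bounds---together with an analysis of gradient-based training dynamics showing that the optimisation trajectory is attracted to low-complexity hypotheses, for which existing simplicity-bias and spectral-bias results can be leveraged.

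The hardest step will be the interpretability gap at the end of phase two: rigorously proving, for a state-of-the-art network, that its predictions factor through a prescribed $S^\dagger$ on the data manifold. No current attribution method delivers guaranteed structural decompositions, so the program must lean on surrogate rigorous results (exact characterisations on synthetic distributions, stress-tests of the factorisation under controlled perturbations, and certified approximations on restricted input subsets) augmented by large-scale empirical verification, in the same way a physical postulate is certified by an accumulating body of experiments. This difficulty is also the reason the statement is framed as a conjecture rather than a theorem: once a candidate $S^\dagger$ is fixed, the implications \emph{false structure} $\Rightarrow$ \emph{instability} and \emph{false structure} $\Rightarrow$ \emph{simplicity} can be converted into theorems, but the claim that the learned network in fact \emph{is} captured by such a structure is irreducibly hybrid between proof and computation.
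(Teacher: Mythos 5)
Your overall stance --- that Conjecture \ref{con1:false} cannot be proved outright and must instead be supported by a hybrid of rigorous results on controlled instances and accumulating computational evidence --- is exactly the paper's stance, so the methodology matches. The concrete program you propose, however, diverges from the paper's in three substantive ways. First, your formalisation of ``structure'' as a factorisation $f_\theta \approx c_{S^\dagger}\circ S^\dagger$ through a feature space is different from the paper's Definition \ref{def:false_structure}, which defines a structure as a pair $(f,L)$ of a labelling function and a list of mutually exclusive predicates, and defines a false structure by \emph{exact} agreement with $f$ on the training set $\mathcal{T}$ together with disagreement on a nonempty set outside $\mathcal{T}$; the paper's notion is coarser but has the advantage that one can actually verify it (Propositions \ref{prop:pred1} and \ref{prop:pred2} are short rigorous proofs that specific $(g,L')$ are false structures). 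Second, you omit the paper's theoretical anchor entirely: Theorem \ref{thrm:main} shows that for uncountably many stable, well-separated classification problems there \emph{exist} minimisers of the training objective that are perfect on arbitrarily large train and test sets yet fail on uncountably many $\epsilon$-perturbations of the training data, while a stable accurate network exists outside the chosen architecture. That existence result is what lets the paper claim all three components (i)--(iii) are already captured at the level of minimisers, before any experiment is run; your plan has no analogue of it and instead leans on simplicity-bias and Rademacher-type arguments that would have to be developed from scratch. Third, you aim at MNIST/CIFAR/ImageNet, whereas the paper deliberately retreats to fully synthetic problems (the $f_a$ classifier of \S\ref{sec:case1} and the stripe images) precisely because, as its Experiment II shows, even in a $32\times 32$ two-class toy problem the exact false structure learned is delicate to pin down --- the candidate $(g,L')$ of Proposition \ref{prop:pred2} is only approximately what the network learns. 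Your honest admission that the factorisation step is the hardest part is therefore well placed, but on real benchmarks it is currently out of reach, and a program that starts there risks never producing a single rigorously verified instance of the conjecture; the paper's choice of trivial synthetic cases is what allows it to exhibit provable false structures and demonstrate experimentally that they, and not the original structure, are what training finds.
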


\begin{remark}[Structure] One can think of the word structure to mean the concept of what would describe a classification problem. Considering an image, we could think of what makes humans recognise a cat or a fire truck. In particular, a structure describing a cat would encompass all the features that make humans recognise cats. However, classification problems extend beyond image recognition. For example, one may want to classify sound patterns or patters in meteorological data, seismic data etc. Thus, we need a proper mathematical definition of what we mean by structure and also false structure.
\end{remark}

\begin{figure}
    \centering
    \includegraphics[width=0.8\textwidth]{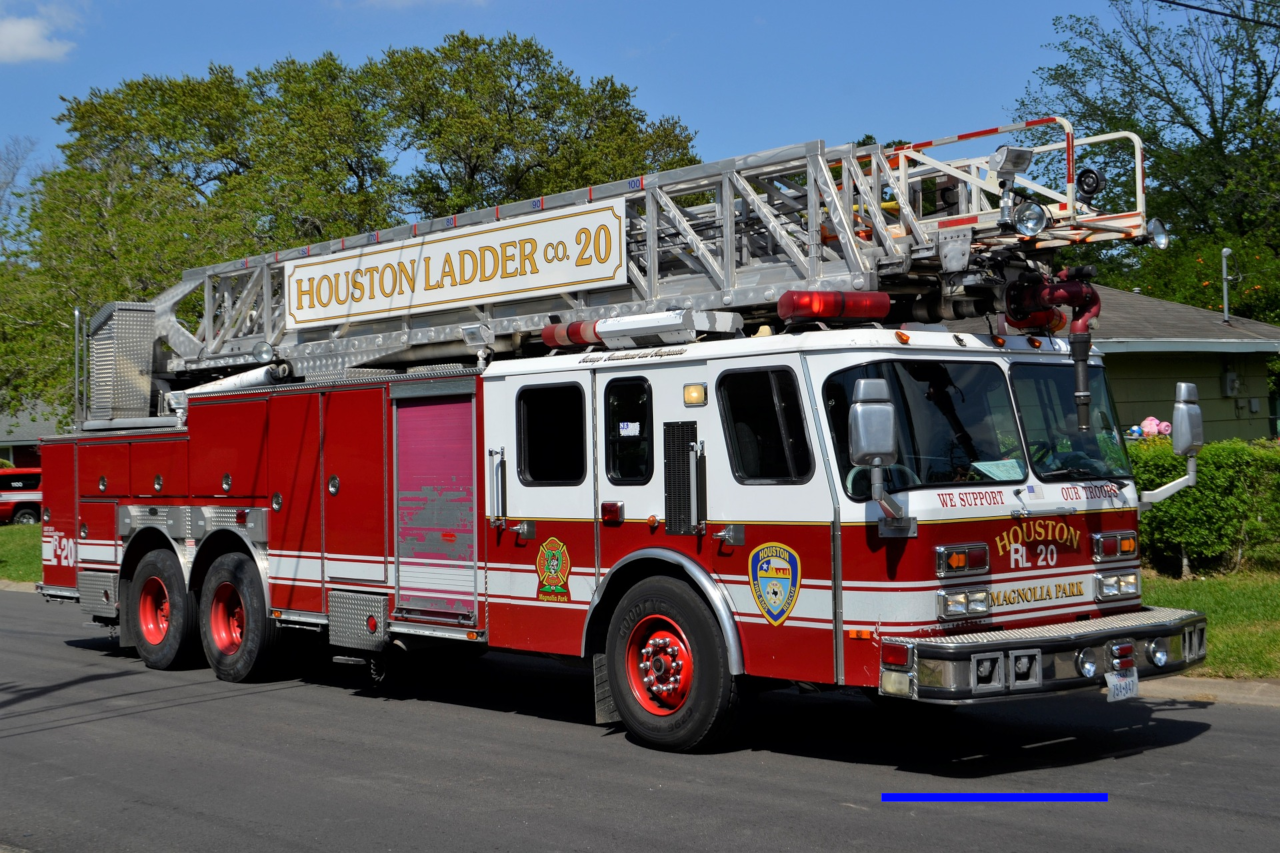} 
    \caption{Image of a fire truck with a horizontal blue line in the lower right corner, as suggested in the thought experiment in Example \ref{ex:thought}. As discussed in the example, the blue line yields many potential false structures.}
    \label{fig:firetruck}
\end{figure}
\subsection{Smale's 18th problem}
Based on a request from V. Arnold, inspired by Hilbert's list of mathematical problems for the 20th century, S. Smale created a list of mathematical problems for the 21st century \cite{21century_Smale}. The last problem on this list is more relevant than ever and echoes Turing's paper from 1950 \cite{Turing_1950} on the question of artificial intelligence. Turing asks if a computer can think, and suggests the imitation game as a test for his question about artificial intelligence. Smale takes the question even further and asks in his 18th problem the following:
\begin{displayquote}
\normalsize
{\it ``
What are the limits of intelligence, both artificial and human?"}
\\[5pt]
\rightline{ --- Smale's 18th problem (from mathematical problems for the 21st century \cite{21century_Smale}). \hspace{15mm}}
\end{displayquote}
Smale formulates the question in connection to foundations of computational mathematics and discusses different models of computation appropriate for the problem \cite{BCSS, Turing_Machine}. The results in this paper should be viewed in connection with Smale's 18th problem and the foundations of computation (see \S \ref{sec:conse}). Our contribution is part of a larger program on foundations of computational mathematics and the Solvability Complexity Index hierarchy 
\cite{SCI_optimization, 
CRAS, 
Hansen_JFA,
Hansen_JAMS, 
Odi, Rogers}  
established to determine the boundaries of computational mathematics and scientific computing.

\section{Why the concept of false structures is needed}

To illustrate the concept of false structure and the conjecture we continue with a thought experiment. 
\begin{example}[A thought experiment explaining false structures and Conjecture \ref{con1:false}]\label{ex:thought}
Suppose a human is put in a room in a foreign country with an unknown language. The person is to be trained to recognise the label "fire truck", and in order to do so, she is given a large complex collection of images with different types of fire trucks and images without fire trucks. Each time an image with a fire truck is shown to the person, she hears the foreign language word for fire truck. In particular, the person is trying to learn the function $f : \mathcal{M} \rightarrow \{0,1\}$, where $1$ means that there is a fire truck in the image, and $\mathcal{M}$ denotes a large set of images. We will refer to $f$ as the original structure. However, on the set $\mathcal{T} \subset \mathcal{M}$ of images shown in the training process there is a small, but clearly visible, horizontal blue line on each of the images showing a fire truck (this is visualised in Figure \ref{fig:firetruck}). On the images without fire trucks there is no line.  
The question is: will the person think that the foreign language word for fire truck means horizontal blue line, blue line, line or actual fire truck. This is an example of the original structure $f$ (describing fire trucks) and three false structures $g_1$ (horizontal blue line), $g_2$ (blue line, not necessary horizontal), and $g_3$ (line, colour and geometry are irrelevant). All of them could have been learned from the same data. However, the structure or false structure that the person has learned will yield wildly different results
on different tests:

\begin{labeling}{\,\,}
%\vspace{-1mm}
\item[\,\, \,(i)  ({\bf False structure $g_1$}):] Suppose the person learns the false structure $g_1$ describing the horizontal blue-line structure that is chosen. Suppose also that the test set of images is chosen such that every image containing a fire truck also has a horizontal blue line. On this test set one will have $100\%$ success, yet the false structure $g_1$ will give incredible instabilities in several different ways. First, a tiny perturbation in terms of removing the blue line will yield a miss classification. Second, a slight rotation of the blue line would mean wrong output, and a slight change in colour will result in an incorrect decision. Thus, there are at least three types of adversarial attacks that would succeed in reducing the success rate from $100\%$ to $0\%$ on the test set. 
%\vspace{-1.5mm}
\item[\,\, \,(ii)  ({\bf False structure $g_3$}):] Suppose it is just the line structure described by the false structure $g_3$ that is learned. Repeating the experiment in (i) with  
all images in the test set containing a fire truck also having a small visible green line would yield the same success as in (i) as well as instabilities, however, now rotations of the line would not have an effect nor changes in the colour. Thus, two less adversarial attacks would be successful. If we did the experiment with the false structure $g_2$, there would have been at least two forms of adversarial attacks available. 
%\vspace{-1.5mm}
\item[\,\, \,(iii)  ({\bf The original structure $f$}):] If the actual fire truck structure, described by $f$, was chosen, one would be as successful and stable as would be expected from a human when given any test set. Note that the original structure $f$ is much more complex and likely harder to learn than the false structures $g_1, g_2, g_3$. 
\end{labeling}
\end{example}

Motivated by the above thought experiment we can now formally define the original structure and false structures. 

\begin{definition}[The original structure and false structures]\label{def:false_structure}
Consider $\mathcal{M} \subset \mathbb{R}^d$ and a string of unique (see Remark \ref{rem:unique}) predicates $L = \{\alpha_0,\hdots,\alpha_N\}$ on $\mathcal{M}$, where $N \in \mathbb{N}$ such that for each $x \in \mathcal{M}$ there is a unique $\alpha_j \in L$ such that $\alpha_j(x) = \text{true}$. For such $x$ define $f(x) = j$. We say that the pair $(f,L)$ is \emph{the original structure} on $\mathcal{M}$. 
A \emph{false structure for $(f,L)$ relative to $\mathcal{T} \subsetneq \mathcal{M}$} is a pair $(g,L^{\prime})$, where $L^{\prime} = \{\beta_0,\hdots,\beta_N\}$ a string of unique predicates   with $g : \mathcal{M} \rightarrow \{0,\hdots, N\}$ such that $g(x) = j$ iff $\beta_j(x) = \text{true}$. Moreover, $\beta_j \neq \alpha_j$ for all $j \in \{0,\hdots,N\}$ and 
\begin{equation}\label{eq:iff}
\begin{split}
g(x) = j \text { iff } f(x) = j \, \forall \, x \in \mathcal{T},\\
\end{split}
\end{equation} 
as well as
\begin{equation}\label{eq:xnotin}
\mathcal{C} = \{x \in \mathcal{M}\setminus \mathcal{T} \, \vert \, f(x) = i, \, g(x) = j, \, \text{for some } i \neq j\} \neq \emptyset.
\end{equation}
We say that $g$ is a partial false structure if $\alpha_j \neq \beta_j$ for at least two different $j \in \{0,\hdots, N\}$ (as opposed to all).
\end{definition}

\begin{remark}[Unique predicates]\label{rem:unique}
By unique predicates $L = \{\alpha_0,\hdots,\alpha_N\}$ we mean that the support of the characteristic functions induced by the predicates in $L$ do not intersect. 
\end{remark}

\begin{remark}[How bad is the false structure?]
Note that Definition \ref{def:false_structure} does not consider how 'far' the false structure $g$ is from the original structure $f$. This is beyond the scope of this paper, however, this can easily be done. For example, suppose $\mathcal{M}$ is equipped with some probability measure $P$, then assuming $\mathcal{C}$ from \eqref{eq:xnotin} is measurable, $P(\mathcal{C})$ would indicate how severe it would be to learn the false structure instead of the original structure. 
\end{remark}

The motivation behind the idea of a false structure can be understood as follows. Suppose one is interested in learning the original structure $(f,L)$ as in Definition \ref{def:false_structure}. In Example \ref{ex:thought} the list $L = \{\alpha_0, \alpha_1\}$ of predicates are $\alpha_1(x) = x \text{ demonstrates a fire truck}$, and $\alpha_0(x) = x \text{ does not demonstrate a fire truck}.$ 
In order to learn $(f,L)$ we have a training set $\mathcal{T} \subset \mathcal{M}$. However, if we have a false structure relative to $\mathcal{T}$, as in Example \ref{ex:thought}, $(g_1,L^{\prime})$ where $L^{\prime} = \{\beta_0, \beta_1\}$ with 
\begin{equation*}
\begin{split}
\beta_1(x) &= x \text{ demonstrates a horizontal blue line}, \\
\quad \beta_0(x) &= x \text{ does not demonstrate a horizontal blue line},
\end{split}
\end{equation*}
 how do we know that we have not learned $(g_1, L^{\prime})$ instead? In Example \ref{ex:thought} there are three different false structures, each with its different instability issues. 

\begin{remark}[Formulation of the predicate]
By "$x$ demonstrates a $z$" (where $z$ was a horizontal blue line above) in the previous predicate we mean that the main object showing in $x$ is $z$, and that there is only one main object. The word demonstrate is slightly ambiguous, however, for simplicity we use this formulation. 
\end{remark}

Example \ref{ex:thought} illustrates the issues in Conjecture \ref{con1:false} very simply. Indeed, a simple false structure could give great performance yet incredible instabilities. Let us continue with the thought experiment, however, now we will replace the human in Example \ref{ex:thought} with a machine, and in particular, we consider the deep learning technique. 
\begin{example}
We consider the same problem as in Example \ref{ex:thought}, however, we replace the human by a neural network that we shall train. Indeed, we let  
$
 f: \mathcal{M} \subset \mathbb{R}^d \rightarrow \{0,1\}
$
be the function deciding if there is a fire truck in the image, where $\mathcal{M}$ is as in Example \ref{ex:thought}. The training set $\mathcal{T} = \{x^1, \hdots, x^r\}$ and test set $\mathcal{C} = \{y^{1}, \ldots, y^{s}\}$ consists of images $x^j$ and $y^{j}$ with and without fire trucks. However, all fire truck images also contain a small blue horizontal line, and there is no blue line in the images without fire trucks. 
We choose a cost function $C$, a class of neural networks $\mathcal{NN}$ and approximate the optimisation problem 
\begin{equation}\label{eq:minimisation_problem}
\Psi \in  \mathop{\mathrm{arg min}}_{\Phi \in \mathcal{NN}}  C(v,w)  \text{ where } v_j = \Phi(x^j), w_j = f(x^j) \text{ for } 1 \leq j \leq r.
\end{equation}
The question is now, given that there are three false structures $(g_1, L_1^{\prime})$, $(g_2,L_2^{\prime})$, $(g_3, L_3^{\prime})$ (the predicates in $L_j^{\prime}$ would come from the description in Example \ref{ex:thought}) that would also fit \eqref{eq:minimisation_problem}:
\begin{displayquote}
\normalsize
{\it Why should we think that the trained neural network has picked up the correct structure, and not any of the the false structures? 
}
\end{displayquote}
\end{example}

\begin{remark}[Simplicity]
Note that there could be a minimiser $\Psi$ of \eqref{eq:minimisation_problem} such that $\Psi = f$, however, this minimiser may be hard to reach and hence one finds another minimiser $\tilde \Psi$ of  \eqref{eq:minimisation_problem} such that $\tilde \Psi = g_1$, say. We will see examples of this phenomenon below. 
\end{remark}

\subsection{Support for the conjecture and how to establish it}
Unlike common conjectures in mathematics, Conjecture \ref{con1:false} can never be proven with standard mathematical tools. The issue is that all neural networks that are created are done so with inaccurate computations. Thus, actual minimisers are rarely found, if ever, but rather approximations in one form or another. Thus Conjecture \ref{con1:false} should be treated more like a postulate in theoretical physics, like 'the speed of light is constant'. One can never establish this with a mathematical proof, however, mathematical theory and experiments can help support the postulate. 

Note that there is already an overwhelming amount of numerical evidence that Conjecture \ref{con1:false} is true based on the myriad of experiments done over the last five years. 
Indeed, we have the following documented cases: \textit{(I) Unrecognizable and bizarre structures are labeled as natural images} \cite{Nguyen2015DeepNN}. Trained successful neural nets classify unrecognizable and bizarre structures as natural images with standard labels with high confidence. 
Such mistakes would not be possible if the neural network actually captured the correct structure that allows for image recognition in the human brain. 
\textit{(II) Perturbing one pixel changes the label} \cite{Su2019OnePA, Vargas2019UnderstandingTO}. It has been verified that trained and successful networks change the label of the classification even when only one pixel is perturbed. Clearly, the structures in an image that allows for recognition by humans are not affected by a change in a single pixel. 
\textit{(III) Universal invisible perturbations change more than $90\%$ of the labels}  \cite{moosavi17, moosavi16}. The DeepFool software \cite{Fawzi2017TheRO} demonstrates how a single almost invisible perturbation to the whole test set dramatically changes the failure rate. 
Different structures in images, allowing for successful human recognition, are clearly not susceptible to misclassification by a single near-invisible perturbation. 
However, the false structure learned through training is clearly unstable. 

There is quite a bit of work on establishing which part of the data is crucial for the decision of the classifier
\cite{ribeiro2016should,
selvaraju2017grad,
simonyan2013deep,
zeiler2014visualizing,
zhou2016learning}. 
This is a rather different program compared to establishing our conjecture. Indeed, our conjecture is about the unstable false structures. However, one should not rule out that there might be connections that could help detecting and understanding the false structures. 

\subsection{Consequences of Conjecture \ref{con1:false}}\label{sec:conse}

The correctness of Conjecture \ref{con1:false} may have several consequences both negative and positive.

\noindent {\bf Negative consequences:}
\begin{labeling}{\,\,}
\vspace{-1mm}
\item[\,\, \,(i)] The success of deep learning in classification is not due to networks learning the structures that humans associate with image recognition, but rather that the network picks up unstable false structures in images that are potentially impossible for humans to detect. This means that instability, and hence vulnerability to adversarial attacks, can never be removed until one guarantees that no false structure is learned. This means a potential complete overhaul of modern AI.  
\vspace{-1mm}
\item[\,\, \,(ii)] 
The success is dependent of the simple yet unstable structures, thus the AI does not capture the intelligence of a human.  
\vspace{-1mm}
\item[\,\, \,(iii)] Since one does not know which structure the network picks up, it becomes hard to conclude what the neural network actually learns, and thus harder to trust its prediction. What if the false structure gives wrong predictions?
\end{labeling}
\vspace{-1mm}
\noindent {\bf Positive consequences:}
\begin{labeling}{\,\,}
\vspace{-1mm}
\item[\,\, \,(i)] Deep learning captures structures that humans cannot detect, and these structures require very little data and computing power in comparison to the true original structures, however, they generalise rather well compared to the original structure. Thus, from an efficiency point of view, the human brain may be a complete overkill for certain classification problems, and deep learning finds a mysterious effective way of classifying.
\vspace{-1mm}
\item[\,\, \,(ii)] 
 The structure learned by deep learning may have information that the human may not capture. This structure could be useful if characterised properly. For example, what if there is structural information in the data that allows for accurate prediction that the original structure could not do?
 \end{labeling}
\vspace{-1mm}
\noindent {\bf Consequences - Smale's 18th problem:}
\begin{labeling}{\,\,}
\vspace{-1mm}
\item[\,\, \,(i)] Conjecture \ref{con1:false} suggests that there is a fundamental difference between state-of-the-art AI and human intelligence as neural networks based on deep learning learn completely different structures compared to what humans learn. Hence, in view of Smale's 18th problem, correctness of Conjecture \ref{con1:false} implies both limitations of AI as well as human intelligence. Indeed, the false unstable structures learned by modern AI limits its abilities to match human intelligence regarding stability. However, in view of the positive consequences mentioned above, correctness of Conjecture \ref{con1:false} implies that there is a limitation to human intelligence when it comes to detecting other structures that may provide different information than the structure detected by humans.  
\end{labeling}

\section{Establishing Conjecture \ref{con1:false} - Do false structures exist in practice?}
Our starting point for establishing Conjecture \ref{con1:false} is Theorem \ref{thrm:main} below, for which the proof captures all the three components of the conjecture. We will demonstrate how this happens in actual computations. To introduce some notation, we let $\mathcal{NN}_{\mathbf{N},L}$, with $\mathbf{N} = (N_L,N_{L-1},\dotsc,N_1,N_0)$ denote the set of all $L$-layer neural networks. 
That is, all mappings $\Phi: \mathbb{C}^{N_0} \rightarrow \mathbb{C}^{N_L}$of the form 
\[
\Phi (x) = W_L (  \rho(  W_{L-1} (\rho(     \ldots \rho(W_1(x))    ))        )  ),
\]
with
$
x \in \mathbb{R}^{N_0},
$
where the $W_j$s are affine maps with dimensions given by $\mathbf{N}$, and $\rho$ is a fixed non-linear function acting component-wise on a vector. 
We consider a binary classifier
$
 f: \mathcal{M} \subset \mathbb{R}^d \rightarrow \{0,1\},
$
where $\mathcal{M}$ is some subset. To make sure that we consider stable problems we define the family of well separated and stable sets $\mathcal{S}^f_{\delta}$ with separation at least $\delta > 0$:
\begin{equation}\label{eq:the_S}
\mathcal{S}^f_{\delta} = \{\{x^1, \hdots, x^r\} \, \vert \, \|x^j\| \leq 1, \min_{i\neq j} \|x^i - x^j\|_{\infty} \geq \delta, f(x^j+y) = f(x^j) \text{ for }\|y\|_{\infty} < \delta\}.
\end{equation}
Moreover, the cost function $C$ used in the training is in 
\begin{equation}\label{eq:CF}
\mathcal{C}\mathcal{F} := \{C: \mathbb{R}^{rN_L} \times \mathbb{R}^{rN_L} \rightarrow \mathbb{R}_+ :  C(v,w) = 0 \text{ iff } v = w\}.
\end{equation}

\begin{theorem}[Bastounis, Hansen, Vlacic \cite{SCI_optimization}]\label{thrm:main}
There is an uncountable family $\mathcal{F}$ of classification functions $f: \mathbb{R}^{N_0} \rightarrow \{0,1\}$ such that for each $f \in \mathcal{F}$ and neural network dimensions $\mathbf{N} = (N_L,N_{L-1},\dotsc,N_0)$ with $N_0,L \geq 2$, any $\epsilon > 0$, and any integers $s,r$ with $r \geq 3(N_1+1) \dotsb (N_{L-1}+1)$, there exist uncountably many non-intersecting training sets  $\mathcal{T} = \{x^1, \hdots, x^r\} \in \mathcal{S}^f_{\varepsilon(r+s)}$ of size $r$ (where $\varepsilon(n) := [(4n+3)(2n+2)]^{-1}$) and uncountably many non-intersecting classification sets $\mathcal{C} = \{y^1, \hdots, y^s\} \in \mathcal{S}^f_{\varepsilon(r+s)}$ of size $s$ such that we have the following.  For every $C \in \mathcal{CF}$ there is a neural net 
\begin{equation}\label{eq:minimisation_problem2}
\Psi \in  \mathop{\mathrm{arg min}}_{\Phi \in \mathcal{NN}_{\mathbf{N},L}}  C(v,w)  \text{ where } v_j = \Phi(x^j), w_j = f(x^j) \text{ for } 1 \leq j \leq r
\end{equation}
	such that
	$\Psi(x) = f(x) \quad \forall x \in \mathcal{T} \cup \mathcal{C}.$
	However, there exist uncountably many $v \in \mathbb{R}^{N_0}$ such that 
	\begin{equation}\label{eq:NNCVIncorrectClassification}
	|\Psi(v) -  f(v)| \geq 1/2, \qquad \|v - x\|_{\infty} \leq \epsilon \text{ for some } x \in \mathcal{T}.
	\end{equation}
	Moreover, there exists a stable neural network 
	\begin{equation}\label{eq:stable_network}
	   \hat{\Psi} \notin \mathcal{NN}_{\mathbf{N},L}\, \text{ with } \, \hat{\Psi}(x) = f(x), \,\, \forall \, x \in B_{\varepsilon(r+s)}^{\infty}(\mathcal{T} \cup \mathcal{C}),
	\end{equation}
	where $B_{\varepsilon(r+s)}^{\infty}(\mathcal{T} \cup \mathcal{C})$ denotes the $\varepsilon(r+s)$ neighbourhood, in the $l^{\infty}$ norm, of $\mathcal{T} \cup \mathcal{C}$.
\end{theorem}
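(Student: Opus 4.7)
The crucial simplification is that every $C\in\mathcal{CF}$ satisfies $C(v,w)=0$ iff $v=w$, so $\Psi$ is a global minimiser of \eqref{eq:minimisation_problem2} if and only if $\Psi$ interpolates $f$ exactly on $\mathcal{T}$, independently of the choice of $C$. Accordingly the ``for every $C$'' quantifier collapses: it suffices to exhibit a single $\Psi\in\mathcal{NN}_{\mathbf{N},L}$ such that (a) $\Psi=f$ on $\mathcal{T}\cup\mathcal{C}$, (b) there is an adversarial $v$ at $\ell^\infty$-distance $\leq\epsilon$ from some $x\in\mathcal{T}$, and separately produce the stable witness $\hat\Psi\notin\mathcal{NN}_{\mathbf{N},L}$.

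The families $\mathcal{F},\mathcal{T},\mathcal{C}$ would be produced by first fixing $\varepsilon:=\varepsilon(r+s)$ and packing $r+s$ mutually $\|\cdot\|_\infty$-separated points in the unit ball; the specific expression $\varepsilon(n)=[(4n+3)(2n+2)]^{-1}$ is an elementary packing constant chosen to guarantee this, with slack room to perturb. Each classifier $f\in\mathcal{F}$ is then defined to be \emph{constant on the closed $\varepsilon$-cell} around each sample (so that the samples automatically lie in $\mathcal{S}^f_\varepsilon$) and is otherwise free; continuously sliding the sample centres inside their cells yields uncountably many $f$'s, and for each $f$ uncountably many admissible $\mathcal{T}$ and $\mathcal{C}$.

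The heart of the argument is step (a)-(b). The prescribed lower bound $r\geq 3(N_1+1)\cdots(N_{L-1}+1)$ is at the scale of the classical affine-region upper bound for piecewise-linear networks of widths $\mathbf{N}$, so \emph{generic} training sets of this size cannot be interpolated by such a network and any interpolating $\Psi$ must reuse its few affine regions by aligning several training points within a single region. I would exploit this by arranging the positive-labelled points of $\mathcal{T}$ so that many of them share a common thin slab — e.g.\ so that their projection onto one coordinate axis hits the same short interval. One can then build $\Psi$ whose $1$-level set is (approximately) that axis-aligned slab, which is realisable inside $\mathcal{NN}_{\mathbf{N},L}$ because only a handful of affine regions are needed. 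Since the slab extends macroscopically in the transverse direction while each positive $\varepsilon$-cell is tiny, from some positive $x^j$ a transverse step $v$ with $\|v-x^j\|_\infty\leq\epsilon$ remains inside the slab (so $\Psi(v)=1$) yet exits the $\varepsilon$-cell (so $f(v)=0$), giving \eqref{eq:NNCVIncorrectClassification}. The points of $\mathcal{C}$ are \emph{pre-placed} inside their cells so that their positive members already lie in the slab and their negative members safely outside it, securing $\Psi=f$ on $\mathcal{C}$ as well.

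For the stable witness $\hat\Psi$ I would take the explicit ``bump'' network that places one localised $\ell^\infty$-radius-$\varepsilon$ indicator around each positive point of $\mathcal{T}\cup\mathcal{C}$; this forces hidden width proportional to $r+s$ and therefore escapes the class $\mathcal{NN}_{\mathbf{N},L}$, delivering the $\hat\Psi$ in \eqref{eq:stable_network}. The main obstacle is the third paragraph, and specifically the \emph{simultaneous} accommodation of (i) the region-count bound forcing several training points to share an affine piece, (ii) exact interpolation on $\mathcal{T}\cup\mathcal{C}$ (not just $\mathcal{T}$), and (iii) a genuine adversarial $v$ within $\epsilon$ of a training point; (ii) and (iii) are in tension because $\mathcal{C}$ must avoid the unstable transverse region while still being interpolated. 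The uncountable freedom in choosing the centres of the $\varepsilon$-cells, together with the capacity margin built into $\varepsilon(r+s)$, is precisely what gives enough room to satisfy all three simultaneously.
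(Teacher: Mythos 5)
A preliminary but important point: this paper does not prove Theorem~\ref{thrm:main}. The result is imported verbatim from Bastounis--Hansen--Vla\v{c}i\'c \cite{SCI_optimization}, and the only proofs supplied in the appendix are those of Propositions~\ref{prop:pred1}, \ref{prop:good_network} and \ref{prop:pred2}. So there is no ``paper's own proof'' to compare your proposal against; I can only assess it on its merits.

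On the merits, your opening observation is sound and genuinely useful: since $C(v,w)=0$ iff $v=w$, once you exhibit a single network in $\mathcal{NN}_{\mathbf{N},L}$ interpolating $f$ on $\mathcal{T}$, the minimum value is $0$ and the ``for every $C$'' quantifier collapses. The stable witness is also unproblematic: any network with dimensions different from $\mathbf{N}$ is automatically outside $\mathcal{NN}_{\mathbf{N},L}$, so a wide ``bump'' network satisfies \eqref{eq:stable_network} literally. The gaps are in the middle. First, your central mechanism --- counting affine regions of a piecewise-linear network and invoking $r\geq 3(N_1+1)\dotsb(N_{L-1}+1)$ to force region reuse --- presupposes that $\rho$ is piecewise linear, whereas the class $\mathcal{NN}_{\mathbf{N},L}$ as defined here carries an arbitrary fixed nonlinearity; as written, your capacity argument does not apply to it, and in fact under your own reading (exhibit \emph{one} unstable interpolant) the hypothesis on $r$ is never actually used, which is a sign the intended content of the theorem (that fitting $r$ such points \emph{forces} instability within this architecture) is not what you are proving. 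Second, the uncountability bookkeeping is off: you generate uncountably many configurations by sliding the cell centres, but since your $f$ is \emph{defined} by those centres, each slide changes $f$; the theorem requires, for each \emph{fixed} $f\in\mathcal{F}$, uncountably many pairwise disjoint admissible $\mathcal{T}$ and $\mathcal{C}$, and your construction does not deliver that without modification. Third, you correctly identify the simultaneous satisfaction of interpolation on $\mathcal{T}\cup\mathcal{C}$, membership of both sets in $\mathcal{S}^f_{\varepsilon(r+s)}$, and the existence of an adversarial $v$ within $\epsilon$ of $\mathcal{T}$ as the crux, but you leave it unresolved; asserting that ``the capacity margin built into $\varepsilon(r+s)$'' provides enough room is exactly the step that needs a construction, not a remark. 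For the actual argument you should consult \cite{SCI_optimization} directly.
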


The message of Theorem \ref{thrm:main} and its proof \cite{SCI_optimization} (which serves as a basis for \S \ref{sec:case1}) is summarised below. 
\vspace{-1mm}
\begin{labeling}{\,\,}
\vspace{-1mm}
\item[\,\, \,(i) ({\bf Success})] The successful neural network learns a false structures that correlates well with the true structure of the problem and hence the great success (100\% success on an arbitrarily large test set).
\vspace{-1mm}
\item[\,\, \,(ii) ({\bf Instability})] The false structure is completely unstable despite the original problem being stable (see $\mathcal{S}^f_{\delta}$ in \eqref{eq:the_S}).  Because of the training process of seeking a minimiser of the optimisation problem, the neural network learns the false structure, and hence becomes completely unstable.  Indeed, it will fail on uncountably many instances that are $\epsilon$-away from the training set. Moreover, $\epsilon$ can be made arbitrarily small. 
\vspace{-1mm}
\item[\,\, \,(ii) ({\bf Simplicity})] The false structure is very simple and easy to learn. Moreover, paradoxically, there exists another neural network with different dimensions that becomes stable and has the same success rate as the unstable network, however, there is no known way to construct this network.    
\vspace{-2mm}
\end{labeling}
 Theorem \ref{thrm:main} and its proof provides the starting point in the program on establishing Conjecture \ref{con1:false}. However, the missing part is to show that: the false structure is learned in practice, it is much easier to learn than the original structure, and that the original structure is very unlikely to be learned in the presence of the false structure in the training set. This is done in the next section.  
 
 \subsection{Establishing the conjecture: Case 1}\label{sec:case1}
We will first start by demonstrating that Conjecture \ref{con1:false} is true in the many unique classification cases suggested by Theorem \ref{thrm:main}. 
A simple example is the infinite class of functions
\begin{equation}\label{eq:the_f}
f_a : \mathcal{M} \rightarrow \left\{0,1\right\}, \quad \mathcal{M} = [b,1] \times [0,1], \quad  f_a(x) = \left\lceil \frac {a} {x_1} \right\rceil \mod 2,
\end{equation} 
for $a > 0$ and $0 < b <1$. Consider the predicates
\[
\alpha_0(x) = \left\lceil \frac {a} {x_1} \right\rceil \text{is odd},
\qquad
\alpha_1(x) = \left\lceil \frac {a} {x_1} \right\rceil \text{is even}.
\]
Let $L = \{\alpha_1, \alpha_2\}$ then $(f_a,L)$ is the original structure, as in Definition \ref{def:false_structure}.
We note that $f_a$ is constant on
each of the intervals 
$
\frac{a}{k+1} \leq x_1 < \frac{a}{k}, 
$
where $k \in \mathbb{N},$
and hence may be viewed as a very simple classification problem: given $x$, is $f_a(x) = 1$ or $f_a(x) = 0$? 
To simplify the learning and analysis further we will assume that $a \in \mathbb{N}$, $a < K$ for some chosen $K \in \mathbb{N}$, and 
we let $b=\tfrac{a}{K+1}$. This means that $f_a$ has $K-a$ jump 
discontinuities on the interval $[b,1)$. To ensure $f_a$
is stable with respect to perturbations of size $\epsilon > 0$ on its input, 
we will ensure that each of our samples of $f_a$ lies at least $\epsilon$ away from
each of these jump discontinuities. Hence, we define \begin{equation}\label{eq:stable_area}
\mathcal{S_{\epsilon}} = 
\bigcup_{k=a}^{K} \left(\frac{a}{k+1} + \epsilon, \frac{a}{k} - \epsilon\right). 
\end{equation}
choose the samples from the set $\mathcal{S}_{\epsilon} \times [0,1]$.
This is similar to \eqref{eq:the_S} used in Theorem \ref{thrm:main}. 
To avoid that $\mathcal{S}_{\epsilon}$ is a union of empty sets we will 
always assume that $\epsilon < b^2/(2(a-b))$.
It will not be a goal in itself to learn $f_a$
for any $x$, but rather to learn the right value of $f_a$ within the
$\epsilon$-stable region $\mathcal{S_{\epsilon}}$. Indeed, given that this is a decision problem, inputs close to the boundary are always going to be hard to classify. However, the decision problem stays stable on $\mathcal{S_{\epsilon}}\times [0,1]$.

\begin{figure}[tbp]
    \begin{tabular}{m{0.01\textwidth}m{0.20\textwidth}m{0.20\textwidth}
                                     m{0.20\textwidth}m{0.20\textwidth}}
&$\quad\quad\quad\quad\quad\mathcal{T}_{\delta}^{7}$ &
$\quad\quad\quad\quad\quad\mathcal{T}_{0}^{5000}$ &
$\quad\quad\quad\quad\quad\mathcal{T}_{\delta}^{5000}$ &
$\quad\quad\quad\quad\quad\mathcal{T}_{0}^{10000}$ \\
$\mathcal{C}_{0}$ &
  \includegraphics[width=0.235\textwidth]{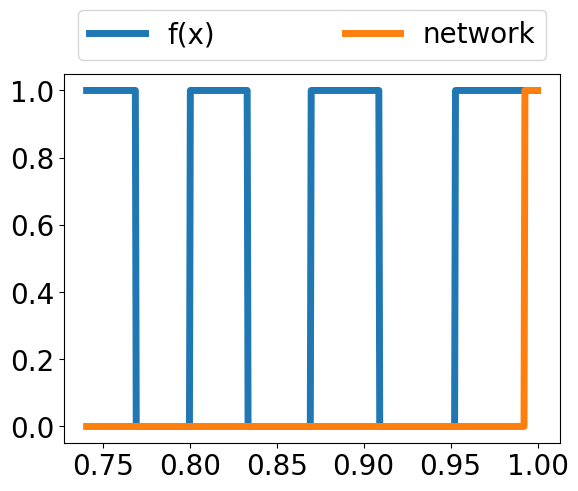}
 &\includegraphics[width=0.235\textwidth]{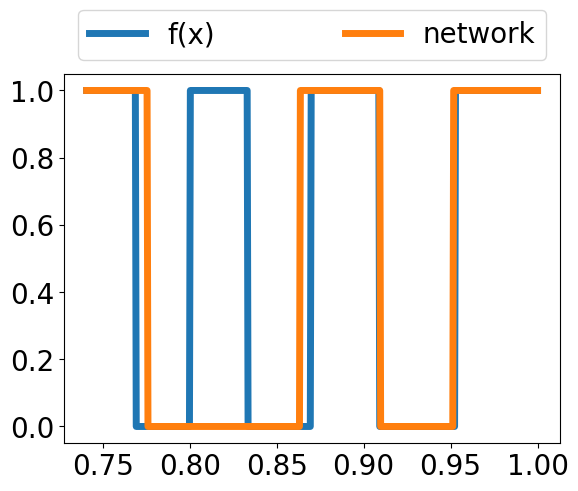}
 &\includegraphics[width=0.235\textwidth]{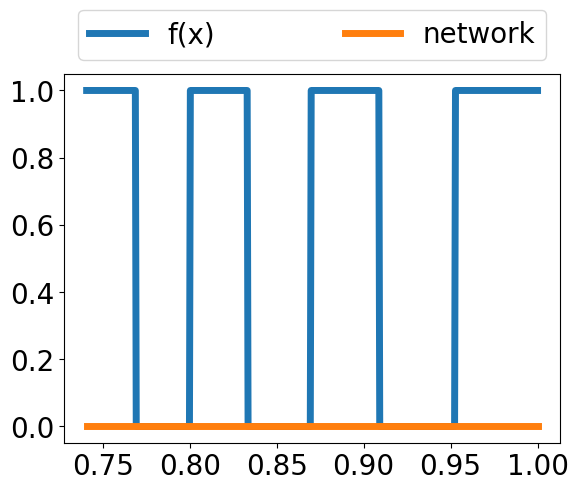}
 &\includegraphics[width=0.235\textwidth]{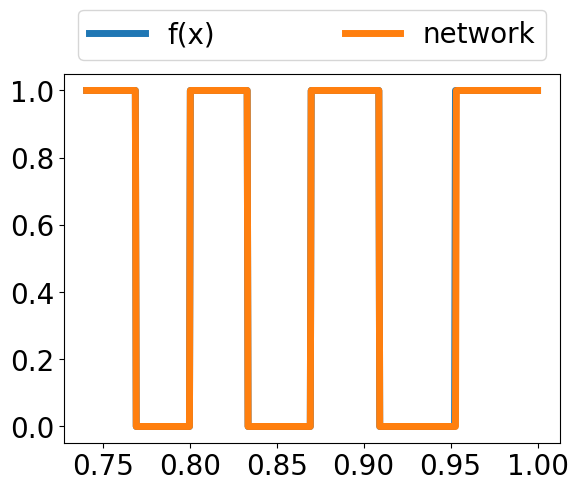} \\
$\mathcal{C}_{\delta}$ &
  \includegraphics[width=0.235\textwidth]{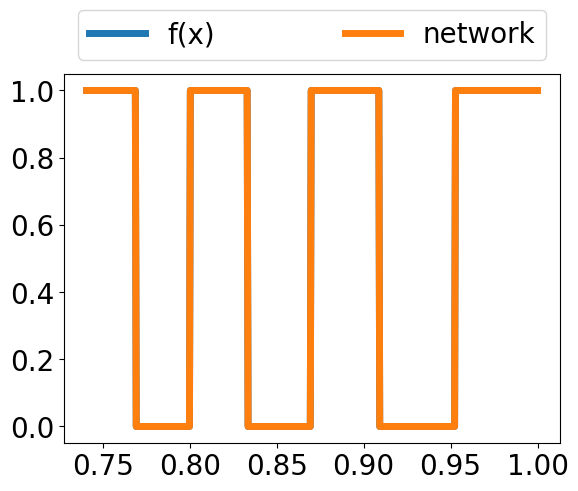} 
 &\includegraphics[width=0.235\textwidth]{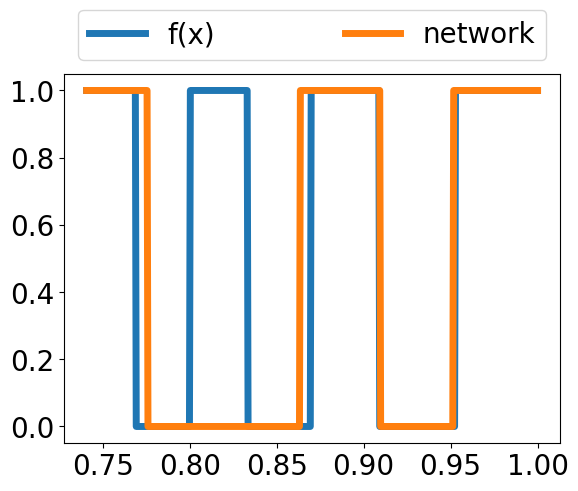} 
 &\includegraphics[width=0.235\textwidth]{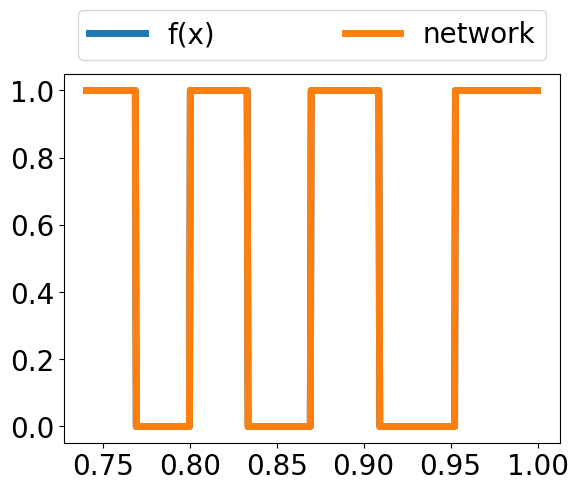} 
 &\includegraphics[width=0.235\textwidth]{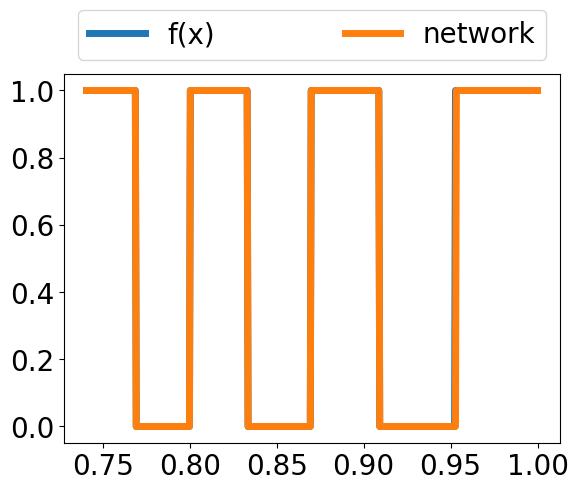} 
    \end{tabular}
\caption{\label{fig:case1}
({\bf Experiment I}) The graphs shows the output of $f_a(x)$ and $\lfloor \sigma(\Psi_i(x))\rceil$
for $x$ in the two sets $\mathcal{C}_{0}$ (top row) and $\mathcal{C}_{\delta}$
(bottom row).  The networks $\Psi_i$, $i=1,\ldots, 4$ have been trained on the
sets $\mathcal{T}_{\delta}^{7}$, $\mathcal{T}_{0}^{5000}$,
$\mathcal{T}_{\delta}^{5000}$ and $\mathcal{T}_{\delta}^{10000}$, respectively,
and are shown from left to right.  
}
\end{figure}

For the learning task we now consider two sets of size $r$, 
\begin{equation}\label{eq:trainingset}
\mathcal{T}_{0}^{r} = \{ (x_1^{i}, 0)\}_{i=1}^{r},
 \quad
\mathcal{T}_{\delta}^{r} = \{ (x_1^{i}, x_2^{i} )\}_{i=1}^{r}, 
\,\, x_2^{i} = \delta f_{a}(x_1^{i}), \qquad x_1^{i} \in \mathcal{S}_{\epsilon},
\end{equation}
where $0 < \delta < \epsilon$. Note that $\mathcal{T}_{\delta}^{r} $ gives rise to a false structure as the next proposition shows. 

\begin{proposition}\label{prop:pred1}
Consider the predicates $\beta_0$ and $\beta_1$ on $\mathcal{M}$ defined by 
\begin{align*}
\beta_0(x) = x_2 \text{ is } 0, \qquad
\beta_1(x) = x_2 \text{ is not } 0. 
\end{align*}
Define $g : \mathcal{M} \rightarrow \{0,1\}$ by $g(x) = 0$ when $\beta_0(x)=\text{true}$  and $g(x) = 1$ when $\beta_1(x)=\text{true}$. Let $L^{\prime} = \{\beta_0, \beta_1\}$. Then $(g,L^{\prime})$ is a false  
structure for $(f_a,L)$ relative to $\mathcal{T}_{\delta}^{r}$.
\end{proposition}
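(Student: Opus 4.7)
The proof is a direct verification of Definition~\ref{def:false_structure} with $N=1$, so the plan is to check each clause in turn; the only non-bookkeeping work is the choice of witnesses, and there is no real obstacle.

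First I would note that $L' = \{\beta_0,\beta_1\}$ is a valid string of unique predicates on $\mathcal{M}$: by construction exactly one of $x_2 = 0$ or $x_2 \neq 0$ holds for each $x \in \mathcal{M}$, so the supports of $\beta_0,\beta_1$ partition $\mathcal{M}$, matching the unique-predicate requirement of Remark~\ref{rem:unique}. Consequently $g$ is well-defined and satisfies $g(x)=j$ iff $\beta_j(x) = \text{true}$, as the definition demands.

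Next I would show $\beta_j \neq \alpha_j$ for $j=0,1$. Since $a \in \mathbb{N}$ with $a < K$, the quantity $\lceil a/x_1 \rceil$ attains both parities on $[b,1]$, so I can select $x_1^*$ with $\lceil a/x_1^* \rceil$ odd and $x_1^{**}$ with $\lceil a/x_1^{**} \rceil$ even. The point $(x_1^*,\delta) \in \mathcal{M}$ then witnesses $\alpha_0 = \text{true}$ while $\beta_0 = \text{false}$, and $(x_1^{**},0) \in \mathcal{M}$ witnesses $\alpha_1 = \text{true}$ while $\beta_1 = \text{false}$. Condition~\eqref{eq:iff} on $\mathcal{T}_\delta^r$ then follows by an immediate case split: any element has the form $x = (x_1^i, \delta f_a(x_1^i))$, so if $f_a(x_1^i)=0$ then $x_2=0$ and $g(x)=0$, while if $f_a(x_1^i)=1$ then $x_2 = \delta \neq 0$ and $g(x)=1$; in both cases $g(x) = f_a(x_1^i)$.

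Finally, to satisfy~\eqref{eq:xnotin} I would produce an explicit element of $\mathcal{C}$: pick any $x_1^\star \in \mathcal{S}_\epsilon$ with $f_a(x_1^\star) = 1$ (possible since $K > a$, so $f_a$ takes both values on $[b,1]$) and set $x^\star = (x_1^\star, 0) \in \mathcal{M}$. Then $g(x^\star) = 0 \neq 1 = f_a(x^\star)$, and $x^\star \notin \mathcal{T}_\delta^r$ because every element of $\mathcal{T}_\delta^r$ with zero second coordinate has $f_a$-value $0$ on its first coordinate. Hence $\mathcal{C}$ is non-empty. The ``hard part'', such as it is, is verifying that $f_a$ attains both parities on $[b,1]$ so that the required witnesses exist; this is immediate from $a < K$ and the fact that $f_a$ has $K-a \geq 1$ jump discontinuities on $[b,1)$.
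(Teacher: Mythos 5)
Your proof is correct and follows essentially the same route as the paper's: a direct verification of Definition~\ref{def:false_structure}, checking uniqueness of the predicates, the equivalence~\eqref{eq:iff} on $\mathcal{T}_\delta^r$ via the identity $x_2 = \delta f_a(x_1)$ with $\delta>0$, and non-emptiness of $\mathcal{C}$ by exhibiting a point $(x_1^\star,0)$ with $f_a=1$, which exists since $K>a$. You are in fact slightly more thorough than the paper, which omits the explicit checks that $\beta_j\neq\alpha_j$ and that the witness lies outside $\mathcal{T}_\delta^r$.
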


Note that for small $\delta$, $g$ becomes unstable on $\mathcal{T}_{\delta}^{r}$. Moreover, the false structure $(g,L^{\prime})$ appears much simpler than the original structure $f_a$. In order to train a neural network to learn the original structure $f_a$ we choose the set of networks (architecture) to be $\mathcal{NN}_{\mathbf{N},2}$. That is all fully connected 2-layer networks with dimensions $\mathbf{N} = [1, 4K, 2]$ and non-linear function $\rho: \mathbb{R} \rightarrow \mathbb{R}$ being the ReLU function. 
What is crucial is that the set $\mathcal{NN}_{\mathbf{N},2}$ is rich enough to
predict the value of $f_a$. By prediction we here mean that for a network
$\phi$ the value of $\lfloor \sigma \circ \phi \rceil$, agrees with $f_a$,
where $\sigma(x) = 1/(1+\exp(-x))$, $x \in \R$ is the sigmoid function, and
$\lfloor \cdot \rceil$ means rounding to nearest integer.  For the function
class $\mathcal{NN}_{\mathbf{N},2}$ above, it is indeed, possible to find such
a network whose prediction agrees with $f_a$ on the stable area
$\mathcal{S}_{\epsilon} \times [0,1]$ in \eqref{eq:stable_area}. This is
formalised in the following statement. 

\begin{proposition}[Existence of stable and accurate network]
\label{prop:good_network}
Let $\sigma(x) = 1/(1+\exp(-x))$, $x\in \R$ be the sigmoid function, and let
$C \colon \R^{r} \times \R^{r} \to \R$ be the cross entropy cost function for binary classification, that is
\begin{equation}\label{eq:cost_function}
C(v, w) = \sum_{j=1}^{r} -w_{j} \log(\sigma(v_j)) - (1-w_{j})\log(1 -\sigma(v_{j})).
\end{equation}
Let $f_a$ be as in \eqref{eq:the_f}.  Then, for any $\eta >
0$, there exists a two layer neural network $\Psi \in \mathcal{NN}_{\mathbf{N},
2}$ with $\mathbf{N} = [1, 4K, 2]$ using the ReLU activation function, such that 
\[
f_a(x) = \lfloor \sigma ( \Psi(x)) \rceil, \quad x \in \mathcal{S}_{\epsilon} \times [0,1], 
\]
where $\lfloor \cdot \rceil$ denotes rounding to the closest integer,  
and for any subset $\mathcal{T} = \{x^{(1)}, \ldots, x^{(r)}\}\subset
\mathcal{S}_{\epsilon} \times [0,1]$, $v = \{\Psi(x)\}_{x \in \mathcal{T}}$ and $w
= \{ f_a(x)\}_{x \in \mathcal{T}}$ we have $C(v, w) \leq \eta$.
\end{proposition}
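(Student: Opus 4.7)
My approach is to build $\Psi$ as an explicit piecewise affine function of $x_1$ alone (ignoring $x_2$) whose output on $\mathcal{S}_\epsilon \times [0,1]$ equals $M(2 f_a(x_1) - 1)$ for an arbitrarily large scalar $M > 0$. Since $f_a$ is constant on each interval $I_k = (a/(k+1), a/k)$ with value $(k+1) \bmod 2$, the only discontinuities of $f_a$ on $[b,1]$ occur at $x_1 = a/k$ for $k = a+1, \ldots, K$, and by \eqref{eq:stable_area} each point of $\mathcal{S}_\epsilon$ lies at distance at least $\epsilon$ from every such jump.

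\textbf{Construction.} For each $k \in \{a+1,\ldots,K\}$, let
$$h_k(x_1) := \tfrac{1}{\epsilon}\bigl[\rho(x_1 - a/k) - \rho(x_1 - a/k - \epsilon)\bigr],$$
where $\rho$ denotes the ReLU. Each $h_k$ uses two hidden units and equals $0$ for $x_1 \le a/k$, equals $1$ for $x_1 \ge a/k + \epsilon$, and is linear in between. Let $\Delta_k = (-1)^{k+1}$ record the sign of the jump of $f_a$ at $a/k$, set $f_0 := (K+1) \bmod 2$ (the value of $f_a$ on the leftmost interval $I_K$), and define
$$\Psi(x_1, x_2) := M(2 f_0 - 1) + 2M \sum_{k=a+1}^{K} \Delta_k\, h_k(x_1).$$
Then $\Psi$ lies in $\mathcal{NN}_{\mathbf{N},2}$ with $\mathbf{N} = [1, 4K, 2]$: the weights of $W_1$ on the $x_2$ coordinate are zero, $2(K-a) \le 2(K-1) < 4K$ of the hidden units realise the ReLU pairs above, and any remaining hidden units have their weights and biases set to zero.

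\textbf{Verification and cost bound.} A telescoping check across the intervals $I_K, I_{K-1}, \ldots, I_a$ shows that for every $x \in \mathcal{S}_\epsilon \times [0,1]$, each $h_k(x_1) \in \{0,1\}$ and $\Psi(x) = M(2 f_a(x_1) - 1) \in \{-M,+M\}$. Since $\sigma(M) > 1/2 > \sigma(-M)$ for $M > 0$, rounding yields $\lfloor \sigma(\Psi(x)) \rceil = f_a(x)$, giving the prediction claim. For the cost, whether $w_j = 0$ (so $v_j = -M$) or $w_j = 1$ (so $v_j = M$), the $j$-th summand in \eqref{eq:cost_function} equals $-\log \sigma(M) = \log(1 + e^{-M}) \le e^{-M}$, so $C(v,w) \le r\,e^{-M}$, and choosing $M \ge \log(r/\eta)$ yields $C(v,w) \le \eta$. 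The only delicate point in the construction is calibrating the ramp width to the buffer $\epsilon$: a ramp wider than $\epsilon$ would spill out of the safe zone and could misclassify near a jump. Everything else is routine and the architecture has ample capacity.
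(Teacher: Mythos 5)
Your proof is correct and follows essentially the same strategy as the paper's: an explicit ReLU construction that reproduces $f_a$ on $\mathcal{S}_\epsilon\times[0,1]$ by confining all linear transition ramps to the excluded $\epsilon$-neighbourhoods of the jump points $a/k$, followed by scaling the logit to $\pm M$ so that each cross-entropy summand is at most $\log(1+e^{-M})\le e^{-M}$. The only (cosmetic) difference is that you build a telescoping staircase of step functions (two hidden units per jump) whereas the paper sums indicator bumps $\phi_\epsilon^{c,d}$ over the intervals where $f_a=1$ (four hidden units per interval) and then applies the affine rescaling $\Psi=2N\Phi-N$; both fit comfortably in the width-$4K$ hidden layer.
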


In particular, Proposition \ref{prop:good_network} states, similarly to the last statement in Theorem \ref{thrm:main}, that there is a stable and accurate network approximating $f_a$ that provides arbitrarily small values of the cost function $C$.
 The problem, however, as we will see in the next experiment, is that it is very unlikely to be found it in the presence of the false structure. 

\begin{figure}[t]
\centering
    \begin{tabular}{m{0.21\textwidth}m{0.21\textwidth}m{0.21\textwidth}m{0.21\textwidth}}
    $\quad \Phi(x_1) = \text{vertical}$ &  $\, \Phi(x_2) = \text{horizontal}$ & $\quad \Phi(x_3) = \text{vertical}$ & $\, \Phi(x_4) = \text{horizontal}$ \\
     \includegraphics[width=0.23\textwidth]{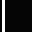} 
    &  \includegraphics[width=0.23\textwidth]{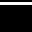}  
    &  \includegraphics[width=0.23\textwidth]{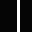}
    &  \includegraphics[width=0.23\textwidth]{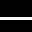}   \\
     $\,\, \Phi(x_5) = \text{horizontal}$ &  $\quad \Phi(x_6) = \text{vertical}$ & $\,\, \Phi(x_7) = \text{horizontal}$ & $\quad \Phi(x_8) = \text{vertical}$ \\
\includegraphics[width=0.23\textwidth]{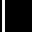} 
    &  \includegraphics[width=0.23\textwidth]{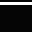}  
    &  \includegraphics[width=0.23\textwidth]{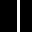}
    &\includegraphics[width=0.23\textwidth]{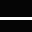}   
\end{tabular}
\caption{({\bf Experiment II}) Upper figures: Elements $\{x^j\}_{j=1}^4 \subset \mathcal{\tilde{C}}^{b,c}$ ($b=0.009$, $c = 0.01$), as well as the network classifications $\Phi(x^j)$ . Lower figures: Elements $\{x^j\}_{j=5}^8 \in \mathcal{\hat{C}}^{b,c}$ as well as the network classifications $\Phi(x^j)$. Note how the network is completely unstable as the label changes with perturbations that are not visible to the human eye.  
}
%\vspace{-5mm}
\label{fig:hor_vert}
\end{figure}

\subsubsection{Experiment I} The experiment is done as follows. We fixed $a=20$, $K=26$, 
$\epsilon = 10^{-2}$ and $\delta = 10^{-4}$ and trained four neural 
networks $\Psi_1, \Psi_2, \Psi_3, \Psi_4 \in \mathcal{NN}_{\mathbf{N},
2}$ with $\mathbf{N} = [1, 4K, 2]$ and ReLU activation function. The networks
$\Psi_i$, $i=1,\ldots, 4$ were trained on the sets $\mathcal{T}_{\delta}^{7}$,
$\mathcal{T}_{0}^{5000}$, $\mathcal{T}_{\delta}^{5000}$ and
$\mathcal{T}_{0}^{10000}$, respectively. For the set
$\mathcal{T}_{\delta}^{7}$, we ensured that the first components $x_1^{i}$ were located in separate intervals of $\mathcal{S}_{\epsilon}$.
Otherwise it would be infeasible to learn $f_a$ from $7$ samples.  For the other
sets we distributed the first components approximately equally between the
disjoint intervals of $\mathcal{S}_{\epsilon}$. To investigate which structure the
networks had learned, we define the two sets,
\begin{equation}\label{eq:testset} 
 \mathcal{C}_{0} = \{(x_1,0) \, \vert \, x_1 \in [b,1]\}\quad \text{and}\quad \mathcal{C}_{\delta} = \{(x_1,x_2) \, \vert \, x_1 \in [b,1], x_2 = \delta f_{a}(x_1) \}.
\end{equation}
We have plotted $f_a(x)$ and $\lfloor \sigma(\psi_{i}(x))\rceil$  for
$x$ in $\mathcal{C}_{0}$ and $\mathcal{C}_{\delta}$. The results are displayed in Figure \ref{fig:case1} and give the following conclusions. 
If a network 
has learned the false structure $(g,L^{\prime})$, then 
$\lfloor \sigma(\Psi_{i}(x))\rceil$ should agree with $f_{a}$ on 
$\mathcal{C}_{\delta}$, while it should be all zero on $\mathcal{C}_{0}$.
On the other hand if the network has not learned $(g,L^{\prime})$, then it should have the same output on both $\mathcal{C}_{\delta}$ and $\mathcal{C}_{0}$. If the network has learned $(f_a,L)$, it should agree with $f_a$ on both $\mathcal{C}_{\delta}$ and $\mathcal{C}_{0}$.

{\bf Conclusion (Exp I):}  $\Psi_1$, trained on $\mathcal{T}_{\delta}^{7}$ ($7$ samples) learns the false structure $(g,L^{\prime})$. $\Psi_2$, trained on $\mathcal{T}_{0}^{5000}$, does not learn the original structure $(f_a,L)$ nor the false structure $(g,L^{\prime})$. $\Psi_3$, trained on $\mathcal{T}_{\delta}^{5000}$, learns the false structure $(g,L^{\prime})$. $\Psi_4$, trained on $\mathcal{T}_{\delta}^{10000}$, learns the original structure $(f_a,L)$. Note that the conclusion supports Conjecture \ref{con1:false}: The false structure $(g,L^{\prime})$ for $(f_a,L)$ relative to $\mathcal{T}_{\delta}^{r}$ is unstable and simple and is learned with only $7$ samples. It also gives fantastic success on $\mathcal{C}_{\delta}$. The original structure $(f_a,L)$ is difficult to learn ($10000$ samples are needed to succeed, yet $5000$ samples are too few).  
Moreover, when training on $\mathcal{T}_{\delta}^{r}$, it is impossible to learn $(f_a,L)$, even with an excessive amount of samples, and the false structure $(g,L^{\prime})$ is always learned. In particular, despite 
Proposition \ref{prop:good_network} assuring that the architecture chosen is rich enough to include stable networks that approximate $f_a$ well, and should be found when the cost function is small, the good network is not found in nearly all cases.

\subsection{Establishing the conjecture: Case 2}
\renewcommand\thesubfigure{\figurename~\thefigure\alph{subfigure}}
  \begin{figure}
  \centering
 	\begin{subfigure}{0.45\textwidth}
    
 	\centering
    \begin{tikzpicture}[scale=0.6]
        \filldraw[black!15] (0, 2.25) rectangle (4.5, 3.375);
\filldraw[black!100] (0, 0) rectangle (4.5, 2.25);
\filldraw[black!100] (0, 3.375) rectangle (4.5, 4.5);
\draw[white] (2.8125, 0.5625) node {{\tiny$-a$}};
\draw[white] (2.8125, 1.6875) node {{\tiny$-a$}};
\draw[black] (2.8125, 2.8125) node {{\tiny$1-a$}};
\draw[white] (2.8125, 3.9375) node {{\tiny$-a$}};
\draw[gray!80] (0, 1.125) -- (4.5, 1.125);
\draw[gray!80] (1.125, 0) -- (1.125, 4.5);
\draw[gray!80] (0, 2.25) -- (4.5, 2.25);
\draw[gray!80] (2.25, 0) -- (2.25, 4.5);
\draw[gray!80] (0, 3.375) -- (4.5, 3.375);
\draw[gray!80] (3.375, 0) -- (3.375, 4.5);
\draw (0, 0) rectangle (4.5, 4.5);
\filldraw[black!0] (8.275, 0) rectangle (9.4, 4.5);
\filldraw[black!85] (4.9, 0) rectangle (8.275, 4.5);
\filldraw[black!85] (9.4, 0) rectangle (9.4, 4.5);
\draw[white] (5.4625, 2.8125) node {{\tiny$a$}};
\draw[white] (6.5875, 2.8125) node {{\tiny$a$}};
\draw[white] (7.7125, 2.8125) node {{\tiny$a$}};
\draw[black] (8.8375, 2.8125) node {{\tiny$1+a$}};
\draw[gray!80] (4.9, 1.125) -- (9.4, 1.125);
\draw[gray!80] (6.025, 0) -- (6.025, 4.5);
\draw[gray!80] (4.9, 2.25) -- (9.4, 2.25);
\draw[gray!80] (7.15, 0) -- (7.15, 4.5);
\draw[gray!80] (4.9, 3.375) -- (9.4, 3.375);
\draw[gray!80] (8.275, 0) -- (8.275, 4.5);
\draw (4.9, 0) rectangle (9.4, 4.5);
    \end{tikzpicture}
        \caption{\label{fig:colour}The colour code for the images defining $\mathcal{\tilde M}$. Horizontal stripe: the light coloured pixels have value $1-a$ and the dark coloured pixels have value $-a$. Vertical stripe: the light coloured pixels have value $1+a$ and the dark coloured pixels have value $a$.}
 	\end{subfigure}%
 	$\quad$
    \begin{subfigure}{0.45\textwidth}
 	\centering
    \begin{tikzpicture}[scale=0.6]
        \filldraw[black!0] (0, 2.25) rectangle (4.5, 3.375);
\filldraw[black!85] (0, 0) rectangle (4.5, 2.25);
\filldraw[black!85] (0, 3.375) rectangle (4.5, 4.5);
\draw[white] (2.8125, 0.5625) node {{\tiny$a$}};
\draw[white] (2.8125, 1.6875) node {{\tiny$a$}};
\draw[black] (2.8125, 2.8125) node {{\tiny$1+a$}};
\draw[white] (2.8125, 3.9375) node {{\tiny$a$}};
\draw[gray!80] (0, 1.125) -- (4.5, 1.125);
\draw[gray!80] (1.125, 0) -- (1.125, 4.5);
\draw[gray!80] (0, 2.25) -- (4.5, 2.25);
\draw[gray!80] (2.25, 0) -- (2.25, 4.5);
\draw[gray!80] (0, 3.375) -- (4.5, 3.375);
\draw[gray!80] (3.375, 0) -- (3.375, 4.5);
\draw (0, 0) rectangle (4.5, 4.5);
\filldraw[black!15] (8.275, 0) rectangle (9.4, 4.5);
\filldraw[black!100] (4.9, 0) rectangle (8.275, 4.5);
\filldraw[black!100] (9.4, 0) rectangle (9.4, 4.5);
\draw[white] (5.4625, 2.8125) node {{\tiny$-a$}};
\draw[white] (6.5875, 2.8125) node {{\tiny$-a$}};
\draw[white] (7.7125, 2.8125) node {{\tiny$-a$}};
\draw[black] (8.8375, 2.8125) node {{\tiny$1-a$}};
\draw[gray!80] (4.9, 1.125) -- (9.4, 1.125);
\draw[gray!80] (6.025, 0) -- (6.025, 4.5);
\draw[gray!80] (4.9, 2.25) -- (9.4, 2.25);
\draw[gray!80] (7.15, 0) -- (7.15, 4.5);
\draw[gray!80] (4.9, 3.375) -- (9.4, 3.375);
\draw[gray!80] (8.275, 0) -- (8.275, 4.5);
\draw (4.9, 0) rectangle (9.4, 4.5);
    \end{tikzpicture}
        \caption{\label{fig:colour_false}The colour code for the images defining $\mathcal{\widehat M}$. Horizontal stripe: the light coloured pixels have value $1+a$ and the dark coloured pixels have value $a$. Vertical stripe: the light coloured pixels have value $1-a$ and the dark coloured pixels have value $-a$.}
 	\end{subfigure}
 \end{figure}
Consider $\mathcal{M}$ to be the collection of $32\times32$ grey scale images with a $3$-pixel wide either  horizontal or vertical light stripe on a dark background as shown in Figure \ref{fig:hor_vert}. The colour code is as follows: $-0.01$ is black and $1+0.01$ is white. Hence, numbers between $-0.01$ and $0.01$ yield variations of black and numbers between $1-0.01$ and $1+0.01$ give variations of white. Thus, there are slight differences in the black and white colours, however, they are typically not visible to the human eye.

Define the original structure $(f,L)$ on $\mathcal{M}$ with $L = \{\alpha_0, \alpha_1\}$, where 
\begin{equation}\label{eq:f_orig}
\alpha_0(x) = x \text{ has a light horizontal stripe}, \quad
\alpha_1(x) = x \text{ has a light vertical stripe},
\end{equation}
and note that the original structure $(f,L)$ is very robust to any small perturbations.

\subsubsection{Experiment II} The experiment is inspired by an example from \cite{Fawzi2017TheRO}, and is done as follows. Define the two sets
\begin{align*}
\mathcal{\tilde M} &= \{x \in \mathcal{M} \, \vert \, x \text{ has a specific colour code described in \ref{fig:colour}  with } a>0\},\\
\mathcal{\widehat M} &= \{x \in \mathcal{M} \, \vert \, x \text{ has a specific colour code described in \ref{fig:colour_false}  with } a>0\},
\end{align*}
and notice that both are non-intersecting subsets of $\mathcal{M}$. 
Next let 
$\mathcal{\tilde C}^{b,c} \subset \mathcal{\tilde{M}}$ and 
$\mathcal{\widehat C}^{b,c} \subset \mathcal{\widehat{M}}$, each contain 1000 elements
from their respective sets and where the value of $a \in [b,c]$. Each element of 
$\mathcal{\tilde C}^{b,c}$ and  $\mathcal{\widehat C}^{b,c}$, is chosen with an 
equal probability of being a vertical or horizontal stripe, and the value 
of $a$ is chosen from the uniform distribution on $[b,c]$. 

We have trained a neural network $\Phi$ on a training set   $\mathcal{T}$ which
contains exactly the $60$ unique elements in $\mathcal{\tilde M}$ for which $a
= 0.01$. The network $\Phi$ has a $100\%$ success rate on the set
$\mathcal{\tilde{C}}^{b,c}$, $b=0.009$, $c=0.01$, yet its success rate on
$\mathcal{\widehat{C}}^{b,c}$ is $0\%$. As is evident from Figure
\ref{fig:hor_vert}, $\Phi$ misclassifies images that look exactly the same as
the ones that it successfully classifies. Hence, it is completely unstable and
$\Phi$ has clearly not learned the original structure $(f,L)$. Thus, a
pertinent question is: 
\begin{displayquote}
\normalsize
\vspace{-2mm}
{\it What is the false structure that $\Phi$ has learned?  
}
\vspace{-2mm}
\end{displayquote}
To answer the question we begin with the following proposition. 
\begin{proposition}
\label{prop:pred2}
Consider the predicates $\beta_0$ and $\beta_1$ on $\mathcal{M}$ defined by 
\[
\beta_0(x) = \text{The sum of the pixel values of $x$ are } \leq 96,
\] 
\[
\beta_1(x) = \text{The sum of the pixel values of $x$ are } > 96,
\] 
and let $L^{\prime} = \{\beta_0, \beta_1\}.$ Let $g(x) = 0$ when $\beta_0(x) = \text{true}$, and $g(x) = 1$ when $\beta_1(x) = \text{true}$. 
Then $(g,L^{\prime})$ is a false structure for $(f,L)$ relative to $\mathcal{\tilde M}$.
\end{proposition}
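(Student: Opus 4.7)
The plan is to verify each clause of Definition \ref{def:false_structure} for the pair $(g,L^{\prime})$ relative to $\mathcal{\tilde M}$. The only non-trivial clause is the agreement condition \eqref{eq:iff}, which will reduce to an explicit arithmetic check using the pixel counts in a $32\times 32$ image and the colour codes prescribed by Figure \ref{fig:colour}, so I would organise the verification around that computation.

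First I would dispose of the easy points. The predicates $\beta_0,\beta_1$ are unique in the sense of Remark \ref{rem:unique}, since the conditions ``sum of pixel values $\leq 96$'' and ``sum of pixel values $>96$'' have disjoint supports on $\mathcal{M}$. Likewise $\beta_j\neq \alpha_j$ for $j=0,1$, because $\alpha_0,\alpha_1$ depend on the geometry (horizontal vs. vertical stripe) while $\beta_0,\beta_1$ depend only on the numerical aggregate, and simple examples in $\mathcal{M}$ can be built where these disagree.

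Next, the core step: proving \eqref{eq:iff} on $\mathcal{\tilde M}$. Every image has $32\times 32=1024$ pixels, of which a $3$-pixel wide stripe covers $3\times 32=96$ and the background covers the remaining $928$. Using the colour code of Figure \ref{fig:colour}, for a horizontal image in $\mathcal{\tilde M}$ the pixel sum is
\[
96(1-a)+928(-a)=96-1024a<96,
\]
so $\beta_0$ holds and $g(x)=0=f(x)$, while for a vertical image in $\mathcal{\tilde M}$ the pixel sum is
\[
96(1+a)+928\cdot a=96+1024a>96,
\]
so $\beta_1$ holds and $g(x)=1=f(x)$. Both conclusions use strict inequalities since $a>0$, which is exactly why the agreement is an iff on $\mathcal{\tilde M}$.

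Finally, for \eqref{eq:xnotin} I would exhibit a concrete element of $\mathcal{M}\setminus\mathcal{\tilde M}$ on which $f$ and $g$ disagree; any horizontal image in $\mathcal{\widehat M}$ will do. Indeed, under the colour code of Figure \ref{fig:colour_false} such an $x$ has pixel sum $96(1+a)+928a=96+1024a>96$, so $g(x)=1$, whereas $f(x)=0$, giving $\mathcal{C}\neq \emptyset$. The only point demanding any care is the bookkeeping of pixel counts and the sign of $a$; once those are fixed the whole verification is a direct arithmetic comparison against the threshold $96$, and the main conceptual observation is that $96$ coincides with the number of stripe pixels, which is what makes this particular threshold separate $\mathcal{\tilde M}$ cleanly into horizontal vs. vertical images.
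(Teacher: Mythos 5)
Your proof is correct and follows essentially the same route as the paper: verify uniqueness of the predicates, compute the pixel sums $96\pm 1024a$ on $\mathcal{\tilde M}$ to establish \eqref{eq:iff}, and exhibit a witness for \eqref{eq:xnotin}. The only (harmless) difference is the witness for $\mathcal{C}\neq\emptyset$: you take a horizontal image in $\mathcal{\widehat M}$ with pixel sum $96+1024a>96$, whereas the paper takes a vertical image outside $\mathcal{\tilde M}$ whose pixel sum is exactly $96$; your choice is if anything stated more carefully, since the paper's claim that \emph{all} vertical images in $\mathcal{M}\setminus\mathcal{\tilde M}$ have pixel sum $96$ holds only for the pure-colour ($a=0$) images.
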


\begin{table}
\setlength\extrarowheight{3pt}

\begin{center}
\begin{tabular}{|c|c|c|c|} \hline
$b$ & $c$ & $\mathcal{\tilde{C}}^{b,c}$ & $\mathcal{\widehat{C}}^{b,c}$ \\ \hline
0.009 & 0.010 &  100\%  &  0.0\% \\ \hline 
0.008 & 0.009 &  100\%  &  0.0\% \\ \hline 
0.007 & 0.008 &  98.7\% &  0.0\% \\ \hline 
0.006 & 0.007 &  98.5\% &  0.6\% \\ \hline 
\end{tabular}

\caption{\label{tab:success}
 Accuracy of the network $\Phi$, on the two test sets $\mathcal{\tilde{C}}^{b,c}$
and $\mathcal{\widehat{C}}^{b,c}$, for various values of $b$ and $c$. In all cases the two sets
contain 1000 elements. }
\end{center}
\end{table}

Note that, contrary to the original structure $(f,L)$, that considers the
geometry of the problem, the false structure $(g,L)$ is clearly completely
unstable on $\mathcal{\tilde M}$. Indeed, a tiny perturbation in the pixel
values will change the label.  The question is whether it is this false
structure that is actually learned by the network $\Phi$. This turns out to be
a rather delicate question. Indeed, 
the fact that we get $100\%$ success rate on $\mathcal{\tilde{C}}^{b,c}$,
$b=0.009, c=0.01$ and $0\%$ success rate on $\mathcal{\widehat{C}}^{b,c}$
suggests that  the false structure that $\Phi$ learns is $(g,L^{\prime})$.
However, by choosing smaller values of $c$ and $d$, we see from Table \ref{tab:success} 
that the success rate of $\Phi$ on $\mathcal{\tilde{C}}^{b,c}$ decreases, whereas 
the success rate of 
$\mathcal{\widehat{C}}^{b,c}$ increases. This implies that this is not entirely the case; if $\Phi$
had learned the false structure $(g,L')$ it should have $100\%$ success rate 
on $\mathcal{\tilde{C}}^{b,c}$ and  0\% success rate on $\mathcal{\widehat{C}}^{b,c}$ for all
$0<b<c\leq0.01$.  This example illustrates how delicate the task of determining
exactly the false structure actually is, even on the simplest examples. The
actual false structure learned by $\Phi$ is likely not too far from
$(g,L^{\prime})$, but making this statement mathematically rigorous, as well as
a full test is beyond the scope of this paper.

{\bf Conclusion (Exp II):} The above numerical examples suggest that $\Phi$ learns a false structure, however, it may not always be $(g,L^{\prime})$ from Proposition \ref{prop:pred2}. 
It should also be noted that if the experiments are done with larger test sets $\mathcal{\tilde{C}}^{b,c}$ and  $\mathcal{\hat{C}}^{b,c}$ the conclusion stays the same. 
The experiments support Conjecture \ref{con1:false} as the false structures are simple to learn (only $60$ samples needed) and completely unstable.  Indeed, tiny perturbations make the network change its label. Moreover,  the network $\Phi$ that learned the false structures
become successful on large test sets.

 \section{Final conclusion}
The correctness of Conjecture \ref{con1:false} may have far reaching consequences on how we understand modern AI, and in particular on how to get to the heart of the problem of universal instability throughout neural networks based on deep learning.  The conjecture is inspired by Theorem \ref{thrm:main} and its proof in addition to the many numerical examples demonstrating instabilities in deep learning and suggesting learning of false structures. This paper provides the foundations for a larger program to establish the conjecture fully. However, as we have demonstrated in this paper, Conjecture \ref{con1:false} appears to be true even in the simplest cases.

\newpage

\bibliographystyle{abbrv}
\bibliography{references}

\section{Appendix}
\subsection{Proofs}
\subsubsection{Proof of Proposition \ref{prop:pred1}}
\begin{proof}[Proof of Proposition \ref{prop:pred1}]
    First, we notice that the predicates in $L'$ are unique, i.e. $x_2$ is
either $0$ or not and hence the classification is unambiguous. Next, we want to
see that $g(x)=j \iff f(x)=j$ on $\mathcal T_\delta^r$. We have that $x_2 =
\delta f_a(x_1)$ for all $x \in \mathcal T_\delta^r$. Hence for the first case
that we get $g(x)=0$ when $x_2 = 0$, which means that $f(x)=0$ since
$\delta > 0$. The other case follows from the same argument. Last, we see that
$\mathcal C$ is non-empty. Let $x_2 = 0$, and choose $x_1 \in [b,1]$ such that 
$f_{a}(x) =1$ for $x = (x_1,x_2)$. Such an $x_1$ will always exists since $K>a$. Then $f(x)=1$
and $g(x)=0$.
\end{proof}

\subsubsection{Proof of Proposition \ref{prop:good_network}}
\begin{proof}[Proof of Proposition \ref{prop:good_network}]
Let $x = (x_1,x_2)\in [b,1]\times [0,1]$, and  let
\begin{equation}
\phi_{\epsilon}^{c,d}(x) = \rho \left( \frac {x_1-c} {2\epsilon} + \frac 1 2 \right) - \rho \left( \frac {x_1-c} {2\epsilon} - \frac 1 2 \right) - \rho \left( \frac {x_1-d} {2\epsilon} + \frac 1 2 \right) + \rho \left( \frac {x_1-d} {2\epsilon} - \frac 1 2 \right), 
\end{equation}
where $\epsilon > 0$ and $\rho(t) = \max\{0, t\}$, $t \in \mathbb{R}$  is the ReLU
activation function.  We start by noticing that $\phi_{\epsilon}^{c,d}$ is a two layered
neural network lying in $\mathcal{NN}_{\mathbf{N},2}$ with
$\mathbf{N}=[1,4,2]$, where the coefficients in front of $x_2$ are all zero.
Hence the support of $\phi_{\epsilon}^{c,d}$ equals  $[c-\epsilon, d+ \epsilon] \times
[0,1]$. We also have that  $\phi_{\epsilon}^{c,d}(x) = 1$ for $x \in [c,d]\times [0,1]$ and
that the range of $\phi_{\epsilon}^{c,d}$ is $[0,1]$. Next let 
\begin{equation}
\Phi(x) = \sum_{k=a}^{K}  
\frac{1}{2}((-1)^{k} + 1) \phi_{\epsilon}^{c(k),d(k)}(x) 
\end{equation}
with 
\[c(k) = \frac{a}{k+1} + \epsilon \quad \text{ and }\quad  d(k) = \frac{a}{k} - \epsilon\]
be a sum of $K-a+1$ smaller networks with non-overlapping support in the first
variable.  We note that the coefficients in front of some of the functions will
be zero, and hence could have been removed, but we will not bother to do so. We notice that $\Phi \in \mathcal{NN}_{[1,4(K-a+1)],2}
\subset \mathcal{NN}_{\mathbf{\mathbf{N}},2}$ with $\mathbf{N} = [1,4K,
2]$, where the inclusion holds since $K > a$. 

Let $x_1 \in (a/(k+1)+\epsilon, a/k - \epsilon)$ and notice that
if $k+1$ is odd, then $\Phi (x) =  f_{a}(x)=1$, and if $k+1$ is even,
then $ \Phi (x) =  f_{a}(x)=0$. Hence we conclude that $\Phi
\in \mathcal{NN}_{\mathbf{N}, 2}$, with $\mathbf{N}=[1,4K,2]$, is a neural
network such that $\Phi(x) = f_{a}(x)$ for all $x \in
\mathcal{S}_{\epsilon} \times [0,1]$. 

Next let $N_1 > 0$ be a constant so that $-\log(\sigma(N_1)) < \eta/r$ and let 
$N_{2}< 0$ be a constant so that $-\log(1 - \sigma(N_2)) < \eta/r$. Furthermore 
let $N = \max\{N_1, -N_2\}$. Then then
\[ \Psi(x) = 2N \Phi(x) - N , \quad x \in [b,1]\times [0,1]\]
is a neural network in $\mathcal{NN}_{\mathbf{N}, 2}$, $\mathbf{N} = [1,4K,2]$.
Finally notice that for any $\mathcal{T} = \{x^{1}, \ldots, x^{r}\} \subset
\mathcal{S}_{\epsilon}\times [0,1]$ we have $w = \{f(x^{i})\}_{i=1}^{r} \subset
\{0,1\}^r$. Hence if  $w_{j} = 0$, then $\Psi(x^{i}) \leq -N$ and if $w_{j} = 1$
then $\Psi(x^{j}) \geq N$. letting $v = \{\Psi(x^{i})\}_{i=1}^{r}$, 
we readily see that 
\[ C(w,v) = \sum_{j=1}^{r} -w_{j}\log(\sigma(v_j)) - (1-w_j) \sigma(1-v_j) < \sum_{j=1}^{r} \eta/r = \eta, \]  
moreover, by the same argument as above we see that $\lfloor \sigma(\Psi(x)) \rceil = f(x)$ 
for all $x \in \mathcal{S}_{\epsilon}\times [0,1]$.
\end{proof}

\subsubsection{Proof of Proposition \ref{prop:pred2}}
\begin{proof}[Proof of Proposition \ref{prop:pred2}]
    The proof is similar to the proof of Proposition \ref{prop:pred1}. We again
start with the uniqueness of $L'$. It is clear that the number of pixels is
either larger than $96$ or smaller or equal. Hence, $g$ is well defined by the
predicates $\beta_0$ and $\beta_1$. Next, we recognise that for images in
$\tilde{\mathcal M}$ the images with a vertical line sum up to $3 \cdot 32+32^2a$
and for the horizontal lines they sum up to $3\cdot 32 - 32^2a$. Hence, for all
all images $x \in\tilde{\mathcal M}$ we have
that those with horizontal lines have values $<96$ and for vertical lines $>96$
and therefore coincide with the evaluation of the structure $f$. Last, we see
that $\mathcal C$ in non-empty. All images in $x \in \mathcal{M}\setminus
\mathcal{\tilde{M}}$ with vertical lines have that the pixel sum is $96$.
Therefore, they also get classified as having a horizontal line, which is then
different to the classification by $f$.
\end{proof}

\subsection{Description of training procedures}
In this section we intend to describe the training procedure in detail, so that 
all the experiments becomes reproducible. A complete overview of the code, and 
the weights of the trained networks described in this paper can be downloaded from 
\url{https://github.com/vegarant/false_structures}. Before we start we would also 
like to point out that in each of the experiments there is an inherit randomness 
due to random initialization of the network weights, and ``on the fly''
generation of some of the training  and test sets. Hence, rerunning the code,
might result in slightly different results. It is beyond the scope of this
paper, to investigate how often, and under what circumstances a false structure
is learned. 

All the code have been implemented in Tensorflow version 1.13.1, and all layers
have used Tensorflows default weight initializer, the so called
``glorot\_uniform" initializer. If not otherwise stated, the default options 
for other parameters is always assumed.

\subsubsection{Experiment I}

\noindent {\bf Network architecture.} 
We considered two-layer neural networks with a ReLU activation function between
the two layers. The output dimension from the first layer was set to $4K$, and
the output dimension of the final layers was 1. An observant reader, who reads
the proof of Proposition \ref{prop:good_network}, might notice that it would be possible to
decrease the output dimension of the first layer. In our initial tests, we did 
this, but doing so made it substantially harder to learn the true structure $f_a$.  

\vspace{2mm}

\noindent {\bf Training parameters.}
We trained the networks using the cross entropy loss function for binary
classification as described in Proposition \ref{prop:good_network}. All networks was trained
using the ADAM optimizer, running  30000 epochs. The network trained on the set
containing only 7 samples used a batch size of 7, the three other networks used
a batch size of 50, with a shuffling of the data samples in each epoch.  

\vspace{2mm}

\noindent {\bf Training data.}
The training sets considered are $\mathcal{T}_{\delta}^{7},
\mathcal{T}_{0}^{5000}, \mathcal{T}_{\delta}^{5000}$ and
$\mathcal{T}_{\delta}^{10000}$, and is described in the main document. We point
out that for each of the training steps, each of these sets where drawn at
random. Hence there is some randomness in the experiment itself.  

\subsubsection{Experiment II}

\noindent {\bf Network architecture.} 
The trained network $\Phi$ had the following architecture.  
\begin{center}
\begin{tikzpicture}[scale = 1]
\tikzstyle{vertex} = [rectangle, fill=gray!30]
\tikzstyle{selected vertex} = [vertex, fill=red!50]
\tikzstyle{selected edge}= [draw,line width=1pt,-,red!100]
\tikzstyle{edge}= [->,black,line width=1pt]

\node[vertex] (v1) at (0,0) {Input};
\node[vertex] (v2) at (2,0) {Conv2D}; 
\node[vertex] (v3) at (4,0) {ReLU}; 
\node[vertex] (v4) at (6,0) {MaxPool}; 
\node[vertex] (v5) at (8,0) {Conv2D}; 
\node[vertex] (v6) at (10,0) {ReLU}; 

\node[vertex] (v7) at (2,-1) {MaxPool}; 
\node[vertex] (v8) at (4,-1) {Dense}; 
\node[vertex] (v9) at (6,-1) {Dense}; 

\draw[edge](v1)--(v2);
\draw[edge](v2)--(v3);
\draw[edge](v3)--(v4);
\draw[edge](v4)--(v5);
\draw[edge](v5)--(v6);
\draw[edge](v6)--(v7);
\draw[edge](v7)--(v8);
\draw[edge](v8)--(v9);
\end{tikzpicture}
\end{center}
where the to 2D convolutional layers (Conv2D) had a kernel size of 5, strides
equal $(1,1)$, padding equal ``same". The first of the convolutional layers had
24 filters, whereas the last had 48 filters. After each of the convolutional
layers we used a ReLU activation function. For the max pooling layers a pool
size of $(2,2)$ was used, with padding equal ``same". After the final max pool
layer the output of the layer was reshaped into a vector and feed to a dense
layer. The output dimension of the first dense layer was 10, whereas for the last 
it was 1.  

\vspace{2mm}

\noindent {\bf Training parameters.}
We trained the networks using the cross entropy loss function for binary
classification as described in Proposition \ref{prop:good_network}. The network
was trained using the ADAM optimizer for 10 epochs, with a batch size of 60. We
ran the code a few times to capture the exact false structure as described in
the main document, as it sometimes capture a slightly different false structure, which
does not have exactly 0\% success rate on $\mathcal{\widehat{C}}^{b,c}$,
$b=0.009$ and $c=0.01$.  
 
\vspace{2mm}

\noindent {\bf Training data.}
The network was trained on the training set $\mathcal{T} \subset
\mathcal{\tilde{M}}$ which contains exactly the 60 unique elements in
$\mathcal{\tilde{M}}$ for which $a = 0.01$.

\end{document}